\tikzstyle{bnarrow}=[
\tikzstyle{bayesnet}=[
\tikzstyle{bnnode}=[
\tikzstyle{cpt}=[
\let\c@table\c@figure
\DeclareMathOperator*{\Ex}{\mathbb{E}}
\DeclareMathOperator*{\argmax}{arg\,max}
\DeclareMathOperator*{\argmin}{arg\,min}
\DeclareMathOperator{\sgn}{sgn}
\DeclareMathOperator{\sigmoid}{sigmoid}
\newcommand{\NP}{\ensuremath{\mathsf{NP}}}
\newcommand{\SharpP}{\ensuremath{\mathsf{\#P}}}
\newcommand\given[1][]{\,#1\vert\,}
\newcommand{\Real}{\ensuremath{\mathbb{R}}\xspace}
\newcommand{\rvars}[1]{\ensuremath{\mathbf{#1}}\xspace}
\newcommand{\Xs}{\rvars{X}}
\newcommand{\Ys}{\rvars{Y}}
\newcommand{\Ms}{\rvars{M}}
\newcommand{\jstate}[1]{\ensuremath{\mathbf{#1}}\xspace}
\newcommand{\xs}{\jstate{x}}
\newcommand{\ys}{\jstate{y}}
\newcommand{\es}{\jstate{e}}
\newcommand{\thetas}{\jstate{\theta}}
\newcommand{\ms}{\mathbf{m}\xspace}
\newcommand{\ws}{\mathbf{w}\xspace}
\newcommand{\cs}{c}
\newcommand{\function}{\ensuremath{\mathcal{F}}\xspace}
\newcommand{\params}{\ensuremath{\mathcal{P}}\xspace}
\newcommand{\pa}[2]{%
  \ifthenelse{\isempty{#2}}%
    {\ensuremath{\theta_{#1}}\xspace}
    {\ensuremath{\theta_{#1 \vert #2}}\xspace}
}
\newcommand{\notc}{\ensuremath{\bar{c}}\xspace}
\newcommand{\abs}[1]{\left\lvert#1\right\rvert}
\theoremstyle{plain} 
\newtheorem{theorem}{Theorem}
\newtheorem{lemma}{Lemma}
\newtheorem{proposition}{Proposition}
\theoremstyle{definition}
\newtheorem{definition}{Definition}
\newtheorem{example}{Example}
\title{What to Expect of Classifiers? \\ Reasoning about Logistic Regression with Missing Features}
\author{
    Pasha Khosravi
    \and
    Yitao Liang\and
    YooJung Choi\And
    Guy Van den Broeck
    \affiliations
    Computer Science Department \\
    University of California, Los Angeles\\
    \emails
    \{pashak,\ yliang,\ yjchoi,\ guyvdb\}@cs.ucla.edu
}
\begin{document}

\maketitle

\begin{abstract}
    While discriminative classifiers often yield strong predictive performance, missing feature values at prediction time can still be a challenge. Classifiers may not behave as expected under certain ways of substituting the missing values, since they inherently make assumptions about the data distribution they were trained on. In this paper, we propose a novel framework that classifies examples with missing features by computing the expected prediction with respect to a feature distribution. Moreover, we use geometric programming to learn a naive Bayes distribution that embeds a given logistic regression classifier and can efficiently take its expected predictions. Empirical evaluations show that our model achieves the same performance as the logistic regression with all features observed, and outperforms standard imputation techniques when features go missing during prediction time. Furthermore, we demonstrate that our method can be used to generate ``sufficient explanations'' of logistic regression classifications, by removing features that do not affect the classification.
\end{abstract}

\section{Introduction}

Missing values are pervasive in real-world machine learning applications.
Learned classifiers usually assume all input features are known, but when a classifier is deployed, some features may not be available, or too difficult to acquire. This can be due to the noisy nature of our environment, unreliable or costly sensors, and numerous other challenges in gathering and managing data \citep{dekel2008learning,graham2012missing}. Consider autonomous driving for example, where blocked sensors may leave observations incomplete.

Moreover, it can be difficult to anticipate the many ways in which a learned classifier will be deployed. For example, the same classifier could be used by different doctors who choose to run different medical tests on their patients, and this information is not known at learning time.
Nevertheless, even a small portion of missing data can severely affect the performance of well-trained models, leading to predictions that are very different from the ones made when all data is~observed. 

Certain machine learning models, such as probabilistic graphical models, provide a natural solution to missing features at prediction time, by formulating the problem as a probabilistic inference task~\citep{koller2009probabilistic,darwiche2009modeling}. 
Unfortunately, with the increasing emphasis on predictive performance, the general consensus is that
such generative models are not competitive as classifiers given fully-observed feature vectors, and that discriminatively-trained models are to be preferred~\citep{ng2002discriminative}.

To alleviate the impact of missing data on discriminative classifiers, it is common practice to substitute the missing values with plausible ones \citep{schafer1999multiple,little2014statistical}. As we will argue later, a drawback for such imputation techniques is that they can make overly strong assumptions about the feature distribution. Furthermore, to be compatible with many types of classifiers, they tend to overlook how the multitude of possible imputed values would interact with the classifier at hand, and risk yielding biased predictions.

To better address this issue, we propose a principled framework of handling missing features by reasoning about the classifier's \emph{expected} output given the feature distribution. One obvious advantage is that it can be tailored to the given family of classifiers and feature distributions.  In contrast, the popular mean imputation approach only coincides with the expected prediction for simple models (e.g. linear functions) under very strong independence assumptions about the feature distribution.
We later show that calculating the expected predictions with respect to arbitrary feature distributions is computationally highly intractable. In order to make our framework more feasible in practice, we leverage generative-discriminative counterpart relationships to learn a joint distribution that can take expectations of its corresponding discriminative classifier. We call this problem \emph{conformant learning}.
Then, we develop an algorithm, based on geometric programming, for a well-known example of such relationship: naive Bayes and logistic regression. We call this specific algorithm naive conformant learning (NaCL).

Through an extensive empirical evaluation over five characteristically distinct datasets, we show that NaCL consistently achieves better estimates of the conditional probability, as measured by average cross entropy and classification accuracy, compared to commonly used imputation methods. Lastly, we conduct a short case study on how our framework can be applied to \emph{explain} classifications.

\section{The Expected Prediction Task}
In this section, we describe our intuitive approach to making a prediction when features are missing and discuss how it relates to existing imputation methods. Then we study the computational hardness of our expected prediction task.

We use uppercase letters to denote features/variables and lowercase letters for their assignment. Sets of variables $\Xs$ and their joint assignments $\xs$ are written in bold.
For an assignment $x$ to a binary variable $X$, we let $\bar{x}$ denote its negation.
Concatenation $\Xs\Ys$ denotes the union of disjoint sets. The set of all possible assignments to $\Xs$ is denoted~$\mathcal{X}$.

Suppose we have a model trained with features $\Xs$ but are now faced with the challenge of making a prediction without knowing all values $\xs$. 
In this situation, a common solution is to impute certain substitute values for the missing data (for example their mean)~\citep{little2014statistical}.
However, the features that were observed provide information not only about the class but also about the missing features, yet this information is typically not taken into account by popular methods such as mean imputation.

We propose a very natural alternative: to utilize the feature distribution to probabilistically reason about what a predictor is expected to return if it could observe the missing features.
\begin{definition}
    Let $\function: \mathcal{X} \to \Real$ be a predictor
    and $P$ be a distribution over features $\Xs$. Given a partitioning of features $\Xs = \Ys\Ms$ and an assignment $\ys$ to some of the features $\Ys$, the \emph{expected prediction task} is to compute
    \begin{equation*}
        E_{\function, P}(\ys) = \Ex_{\ms \sim P(\Ms \mid \ys)} \left[ \function(\ys \ms) \right].
    \end{equation*}
\end{definition}

The expected prediction task and (mean) imputation are related, but only under very restrictive assumptions.
\begin{example}\label{ex:linear}
    Let $\function : \mathcal{X} \to \Real$ be a linear function. That is, $\function(\xs) = \sum_{x \in \xs} w_X x$ for some weights $\ws$. Suppose $P$ is a distribution over $\Xs$ that assumes independence between features: $P(\Xs) = \prod_{X\in\Xs} P(X)$. Then, using linearity of expectation, the following holds for any partial observation~$\ys$:
    \begin{align*} 
        E_{\function, P}(\ys) &= \Ex_{\ms \sim P(\Ms \mid \ys)} \left[ \sum_{y \in\ys} w_Y y + \sum_{m \in\ms} w_M m \right] \\
        &= \sum_{y \in\ys} w_Y y + \sum_{M \in \Ms} w_M \Ex_{m \sim P(M)}[m].
    \end{align*}
    Hence, substituting the missing features with their means effectively computes the expected predictions of linear models if the independence assumption holds. Furthermore, if $\function$ is the true conditional probability of the labels and features are generated by a fully-factorized $P(\Xs)$, then classifying by comparing the expected prediction $E_{\function,P}$ to 0.5 is Bayes optimal on the observed features. That is, an expected prediction higher than 0.5 means that the positive class is the most likely one given the observation, and thus minimizes expected loss, according to the distribution defined by $\function$ and $P$.
\end{example}

\begin{example}
    Consider a logistic regression model $\mathcal{G}(\xs) = \sigmoid(\function(\xs))$ where $\function$ is a linear function. 
    Now, mean imputation no longer computes the expected prediction, even when the independence assumption in the previous example holds.
    In particular, if $\ys$ is a partial observation such that $\mathcal{G}(\ys\ms)$ is positive for all $\ms$, then the mean-imputed prediction is an over-approximation of the expected prediction:
    \begin{align*}
        &\mathcal{G}\!\left(\ys \Ex[\ms]\right) = \sigmoid\left(\Ex[\function(\ys\ms)]\right) \\
        & \qquad > \Ex[\sigmoid(\function(\ys\ms))] = E_{\mathcal{G},P}(\ys).
    \end{align*}
    This is due to Jensen's inequality and concavity of the sigmoid function in the positive portion; conversely, it is an under-approximation in the negative cases.
\end{example}

Example~\ref{ex:linear} showed how to efficiently take the expectation of a linear function w.r.t.\ a fully factorized distribution. Unfortunately, the expected prediction task is in general computationally hard, even on simple classifiers and distributions.

\begin{proposition} \label{prop:uniform-cnf-expectation}
Taking expectation of a nontrivial classifier w.r.t.\ a uniform distribution is \SharpP-hard.
\end{proposition}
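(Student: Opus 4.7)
The plan is to reduce from \#SAT (model counting for Boolean CNF formulas), the canonical \SharpP-complete problem. Given a CNF formula $\phi$ over $n$ Boolean variables $\Xs$, define a classifier $\function_\phi : \{0,1\}^n \to \Real$ by setting $\function_\phi(\xs) = 1$ if $\xs$ satisfies $\phi$ and $0$ otherwise. This is a concrete ``nontrivial'' classifier: it is not constant and admits a representation polynomial in the size of $\phi$ (for instance, as a Boolean circuit that evaluates $\phi$, or a product of clause-indicator functions).

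Take $P$ to be the uniform distribution over $\Xs$ and consider the empty observation $\ys = \emptyset$, so that $\Ms = \Xs$ and $P(\Ms \mid \ys)$ is uniform over $\{0,1\}^n$. The expected prediction then evaluates to
\begin{equation*}
    E_{\function_\phi, P}(\emptyset) \;=\; \frac{1}{2^n}\sum_{\xs \in \{0,1\}^n} \function_\phi(\xs) \;=\; \frac{\#\phi}{2^n},
\end{equation*}
where $\#\phi$ denotes the number of satisfying assignments of $\phi$. Hence any polynomial-time oracle for the expected prediction task yields $\#\phi$ after multiplying by $2^n$, which establishes \SharpP-hardness even when $P$ is the simplest possible feature distribution.

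The main obstacle is really just pinning down what ``nontrivial classifier'' is intended to mean. The reduction above works as soon as the classifier class is at least as expressive as Boolean circuits or CNFs, which covers essentially every model used in practice (threshold networks, decision trees, etc.), so I would phrase the proof at this level of generality. If one wants the hardness to persist for a much more restricted class such as sigmoid-of-linear, an extra step would be required: replace $\function_\phi$ by a smooth surrogate with a sufficiently steep slope and argue that rounding the resulting expectation still recovers $\#\phi/2^n$ exactly, or alternatively reduce from a weighted counting problem that is natively expressible as a linear classifier. I would present the clean \#SAT reduction first and only invoke such extensions if the paper needs hardness restricted to a specific classifier family.
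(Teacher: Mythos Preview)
Your proposal is correct and is essentially the same argument as the paper's: the paper also lets the classifier be the indicator of a logical constraint and observes that its expectation under the uniform distribution counts satisfying assignments, invoking \SharpP-hardness of model counting (citing \citet{roth1996hardness}). Your write-up is more explicit about the reduction details and the meaning of ``nontrivial,'' but the core idea is identical.
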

\begin{proof}
    Suppose our classifier tests whether a logical constraint holds between the input features. Then asking whether there exists a positive example is equivalent to SAT which is \NP-hard. The expected classification on a uniform distribution is solving an even harder task, of counting solutions to the constraint, which is \SharpP-hard~\citep{roth1996hardness}.
\end{proof}
Next, consider the converse in which the classifier is trivial but the distribution is more general.
\begin{proposition}
The expectation of a classifier that returns the value of a single feature w.r.t.\ a distribution represented by a probabilistic graphical model is \SharpP-hard.
\end{proposition}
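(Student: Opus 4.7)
The plan is to reduce from the classical problem of computing conditional marginal probabilities in a Bayesian network, which is \SharpP-hard even on networks of bounded in-degree~\citep{roth1996hardness}. Since Bayesian networks are a special case of probabilistic graphical models, any such marginal query can be phrased as inference over the distribution $P$ in our expected-prediction setting, so it suffices to show how that query is captured by the expected prediction of a ``projection'' classifier.

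Concretely, suppose we are given a Bayesian network $B$ over binary variables $\Xs$, a distinguished variable $X_i$, and an evidence assignment $\es$ to some subset $\Es \subseteq \Xs \setminus \{X_i\}$; the goal is to compute $P_B(X_i = 1 \mid \es)$. I would construct an instance of the expected prediction task with distribution $P = P_B$, classifier $\function(\xs) = x_i$ (which indeed merely returns the value of a single feature), partial observation $\ys = \es$, and missing set $\Ms = \Xs \setminus \Es$, so that $X_i \in \Ms$. Because $X_i$ is binary-valued and $\function$ simply projects onto that coordinate,
\begin{align*}
    E_{\function, P}(\ys)
    &= \Ex_{\ms \sim P(\Ms \mid \es)} \!\left[x_i\right] \\
    &= P_B(X_i = 1 \mid \es),
\end{align*}
so any algorithm for the expected prediction task also solves conditional marginal inference in $B$. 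The reduction is linear in the size of $B$, yielding \SharpP-hardness.

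The main obstacle is not depth but correctness of the setup. One must ensure that $X_i$ actually lies in the missing set $\Ms$, because if it were placed in $\ys$ the expectation would collapse to the observed value $y_i$ and the reduction would fail; this is guaranteed by defining $\Ms = \Xs \setminus \Es$ with $X_i \notin \Es$. One must also confirm that a projection onto a single coordinate qualifies under the proposition's ``classifier that returns the value of a single feature,'' which is immediate. If a self-contained proof were preferred over citing~\citet{roth1996hardness}, I would instead reduce directly from \#SAT by constructing a small Bayesian network whose marginal over a designated output variable encodes the model count of a propositional formula; but the citation route is cleaner and equally rigorous.
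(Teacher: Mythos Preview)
Your proposal is correct and follows essentially the same approach as the paper: both reduce from marginal inference in probabilistic graphical models, citing \citet{roth1996hardness}, by observing that the expected prediction of a single-feature projection classifier is exactly a marginal probability. You simply spell out the reduction in more detail (handling evidence and explicitly placing $X_i$ in the missing set), whereas the paper's proof is a one-line appeal to the same hardness result.
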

\begin{proof}
    Computing expectations of such classifier is as hard as computing marginals in the feature distribution, which is \SharpP-hard for graphical models~\citep{roth1996hardness}.
\end{proof}
Previous propositions showed that the expected prediction task stays intractable, even when we allow either the distribution or the classifier to be trivial.

Our next theorem states that the task is hard even for a relatively simple classifier 
and a tractable distribution.\footnote{All proofs can be found in Appendix~\ref{sec:appx-proofs}.
}
\begin{theorem}\label{thm:lr-over-nb}
    Computing the expectation of a logistic regression classifier over a naive Bayes distribution is \NP-hard.
\end{theorem}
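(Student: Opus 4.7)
The natural route is a polynomial-time reduction from the NP-complete \textsc{Subset Sum} problem: given positive integers $a_1,\dots,a_n$ and a target $T$, decide whether some $\xs\in\{0,1\}^n$ satisfies $\sum_i a_i x_i = T$. NP-hardness of the numerical task is formalised through its standard decision version (``is $\Ex_{\xs\sim P}[\mathcal{G}(\xs)]\ge t$?''), which any polynomial-time algorithm for the expectation would also solve.

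First I would build a trivial naive Bayes with a single-valued class $C$ and $P(X_i=1\mid C)=1/2$ for every $i$, so the induced marginal on $\xs$ is uniform on $\{0,1\}^n$ and $\Ex_{\xs\sim P}[\mathcal{G}(\xs)]=2^{-n}\sum_\xs\mathcal{G}(\xs)$. Next, for an integer $M>0$ to be fixed, construct the two logistic regressions
\begin{equation*}
\mathcal{G}_{\pm}(\xs)=\sigmoid\!\left(M\sum_i a_i x_i \;-\; M\left(T\pm\tfrac{1}{2}\right)\right).
\end{equation*}
Since $a_i$ and $T$ are integers, the argument of each sigmoid is at least $M/2$ in absolute value, so $|\mathcal{G}_-(\xs)-\mathbf{1}[\sum_i a_i x_i\ge T]|\le e^{-M/2}$ uniformly in $\xs$, and analogously $\mathcal{G}_+$ approximates $\mathbf{1}[\sum_i a_i x_i\ge T+1]$. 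Hence
\begin{equation*}
\Ex[\mathcal{G}_-(\xs)]-\Ex[\mathcal{G}_+(\xs)]=\frac{N_{=T}}{2^n}\pm 2e^{-M/2},
\end{equation*}
where $N_{=T}$ counts the subsets summing exactly to $T$.

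Finally, choose $M=\lceil 2(n+3)\ln 2\rceil$ so that $2e^{-M/2}\le 2^{-(n+2)}$. Then the computed difference is at most $2^{-(n+2)}$ whenever $N_{=T}=0$ and at least $2^{-n}-2^{-(n+2)}=3\cdot 2^{-(n+2)}$ whenever $N_{=T}\ge 1$, so a single rational comparison decides the \textsc{Subset Sum} instance. All of the constructed weights $Ma_i$, biases, and naive Bayes tables have bit-length polynomial in the original input, and only two expectation oracle calls are used, giving a polynomial-time Turing reduction.

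The main obstacle is the precision balance: $M$ must be large enough that the sigmoid is within $2^{-(n+3)}$ of a $\{0,1\}$-step function on every integer-valued sum $\sum_i a_i x_i$, yet small enough that $Ma_i$ remains polynomial in the input bit-length. The integrality of the $a_i$ and $T$ is exactly what guarantees a strict $1/2$ margin around the sigmoid's inflection point, allowing a linear-in-$n$ choice of $M$ to simultaneously suppress the exponential-in-$n$ accumulation of per-term approximation error.
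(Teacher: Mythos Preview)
Your reduction is correct, and it differs from the paper's in a meaningful way. The paper reduces from the decision version of the \emph{same-decision probability} problem (\textbf{D-SDP}) of \citet{chen2013sdp}: given a naive Bayes $P$, a threshold $T$, and a bound $p$, decide whether $\sum_\xs \mathbb{I}[P(c\!\mid\!\xs)>T]\,P(\xs)>p$. They convert $P$ to an equivalent logistic regression via Lemma~\ref{lem:nb-to-lr}, then shift and scale the weights so that the sigmoid approximates the indicator $\mathbb{I}[P(c\!\mid\!\xs)>T]$; the expectation over the \emph{given} naive Bayes marginal then equals the SDP.

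You instead reduce from \textsc{Subset Sum}, hold the naive Bayes fixed to a trivial uniform product distribution, and encode all the combinatorics in the logistic regression weights. Both proofs share the key device of amplifying weights to turn a sigmoid into a near-indicator, but yours is more self-contained (it does not rely on the external \textbf{D-SDP} hardness result) and is explicit about the precision trade-off, choosing $M=\Theta(n)$ and verifying that the $2^n$-fold accumulated sigmoid error stays below the $2^{-n}$ granularity of the count. The paper's argument leaves that quantitative step informal. Conversely, the paper's reduction uses a genuinely non-trivial naive Bayes distribution inherited from the SDP instance, so it locates the hardness in the interaction of LR with an arbitrary NB marginal, whereas your construction shows that even the degenerate uniform marginal suffices. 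One cosmetic point: a ``single-valued class $C$'' may read as a degenerate naive Bayes; you can avoid any quibble by taking two classes with $P(X_i\!\mid\!c)=P(X_i\!\mid\!\notc)=\tfrac12$, which yields the same uniform feature marginal.
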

That is, the expected prediction task is hard even though logistic regression classification and probabilistic reasoning on naive Bayes models can both be done in linear time. 

In summary, while the expected prediction task appears natural for dealing with missing data, its vast intractability poses a serious challenge, especially compared to efficient alternatives such as imputation.
Next, we investigate specific ways of practically overcoming this challenge.

\section{Joint Distributions as Classifiers} \label{section:GMasC}

As shown previously, taking expectations is intractable for arbitrary pairs of classifiers and distributions. In this section, we propose \emph{conformant learning} which aims to learn a joint distribution that encodes a given classifier as well as a feature distribution. On such distribution, the expected prediction task is well-defined as probabilistic inference, and is tractable for a large class of probabilistic models, including naive Bayes~\citep{darwiche2009modeling,dechter2013reasoning}.

We first describe how a joint distribution can be used as a classifier that inherently support missing features during prediction time.
Given a distribution $P(\Xs,C)$ over the features $\Xs$ and class variable $C$, we can classify a partial observation $\ys$ simply by computing the conditional probability $P(c \given \ys)$ where $c$ denotes the positive class.\footnote{We assume binary class for conciseness, but our approach easily generalizes to multiclass. Details can be found in Appendix~\ref{sec:appx-multiclass}.} In some sense, a joint distribution embeds a classifier $P(C \given \Ys)$ for each subset of observed features $\Ys$. 
In fact, computing $P(c \given \ys)$ is \emph{equivalent to computing the expected prediction} of classifier $\function$ that outputs $P(c \given \xs)$ for every $\xs$, with respect to distribution $P(\Xs)$:
\begin{align}
   P(c \given \ys) 
     & = \sum_\ms P(c, \ms \given \ys)
       = \sum_\ms P(c \given \ms\ys) P(\ms \given \ys) \nonumber\\
     & = \Ex_{\ms \sim P(\Ms \mid \ys)} \left[ P(c \given \ys\ms) \right] 
       = E_{\function,P}(\ys). \label{eq:jointexpectation}
\end{align}
Nevertheless, the prevailing consensus is that in practice discriminatively training a classifier $P(C \given \Xs)$ should be preferred to generatively learning $P(\Xs,C)$, because it tends to achieve higher classification accuracy \citep{bouchard2004tradeoff,ulusoy2005generative}. 

There are many generative-discriminative pairs obtained from fitting the same family of probabilistic models to optimize either the joint or conditional likelihood~\citep{jaakkola1999exploiting,crftut:fnt,liang2019logistic}, including naive Bayes and logistic regression~\citep{ng2002discriminative}. We formally describe such relationship as follows:
\begin{definition}
    We say $P(\Xs,C)$ \emph{conforms} with ${\function: \mathcal{X} \to [0,1]}$ if their classifications agree: $P(c \given \xs) = \function(\xs)$ for all $\xs$.
\end{definition}

Next, let us study naive Bayes models in detail as they support efficient inference and thus are a good target distribution to leverage for the expected prediction task.
Naive Bayes models assume that features are mutually independent given the class; that is, its joint distribution is $P(\Xs, C) = P(C) \prod_{X \in \Xs} P(X \given C)$.
Under such assumption, marginal inference takes linear time, and so does computing expectations under missing features as in Equation~\ref{eq:jointexpectation}.

Logistic regression is the discriminative counterpart to naive Bayes. It has parameters $\ws$ and posits that
\footnote{Here, $\xs$ also includes a dummy feature that is always $1$ to correspond with the bias parameter $w_0$.} 
\begin{equation*}
    \function(\xs ) = \frac{1}{1+e^{-\ws^T \cdot \xs}}.
\end{equation*}
Any naive Bayes classifier can be translated to an equivalent logistic regression classifier on fully observed features.
\begin{lemma} \label{lem:nb-to-lr}
Given a naive Bayes distribution $P$, there is a unique logistic regression model $\function$ that it conforms with.
Such logistic regression model has the following weights:
\begin{align*}
    w_0 &= \log \cfrac{P(c)}{P(\notc)} + \sum_{i=1}^{n} \log \cfrac{ P(\bar{x}_i \mid c)}{ P(\bar{x}_i \mid \notc)} \\
    w_i &= \log \cfrac{ P(x_i \mid c)}{ P(x_i \mid \notc)} \cdot \cfrac{ P(\bar{x}_i \mid \notc)}{ P(\bar{x}_i \mid c)}, \quad i=1,\dots,n
\end{align*}
Here, $x$, $\bar{x}$ denote $X\!=\!1$, $X\!=\!0$ respectively.
\end{lemma}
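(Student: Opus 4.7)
The plan is to prove the lemma by directly computing the log-odds of the naive Bayes posterior and reading off the logistic regression weights, then arguing uniqueness by a dimension/interpolation argument.

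First I would write the posterior of naive Bayes using Bayes' rule and the conditional independence assumption, and then take the log-odds:
\begin{equation*}
\log \frac{P(c \mid \xs)}{P(\notc \mid \xs)} = \log \frac{P(c)}{P(\notc)} + \sum_{i=1}^{n} \log \frac{P(x_i \mid c)}{P(x_i \mid \notc)}.
\end{equation*}
The key observation is that since each $X_i$ is binary, $\log P(X_i = x_i \mid c)$ is an affine function of $x_i$. Concretely, using $P(X_i = x_i \mid c) = P(x_i \mid c)^{x_i} P(\bar{x}_i \mid c)^{1-x_i}$, we get
\begin{equation*}
\log \frac{P(X_i = x_i \mid c)}{P(X_i = x_i \mid \notc)} = x_i \log \frac{P(x_i \mid c)\, P(\bar{x}_i \mid \notc)}{P(x_i \mid \notc)\, P(\bar{x}_i \mid c)} + \log \frac{P(\bar{x}_i \mid c)}{P(\bar{x}_i \mid \notc)}.
\end{equation*}
Summing over $i$ and combining the constants with the class prior term yields precisely the claimed $w_0$ and $w_i$. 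Since any $\mathcal{F}$ that conforms with $P$ must satisfy $\log(\mathcal{F}(\xs)/(1-\mathcal{F}(\xs))) = \ws^T \xs$ equal to the above expression for every $\xs$, the weights are forced to be as stated.

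For uniqueness, I would note that a logistic regression with weights $\ws$ conforms with $P$ iff $\ws^T \xs$ equals the log-odds of $P(c \mid \xs)$ at every $\xs \in \{0,1\}^n$ (viewed with the dummy $1$ for the bias). Evaluating at $\xs = \mathbf{0}$ pins down $w_0$, and then evaluating at each standard basis vector $e_i$ pins down $w_i$; this affinely independent set of $n+1$ evaluations determines $\ws$ uniquely.

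The only mildly delicate step will be bookkeeping the algebra that separates the $x_i$-dependent from the $x_i$-independent parts and checks that the constants fold correctly into $w_0$; everything else is either direct computation or a standard linear-algebra argument, so I do not anticipate a substantive obstacle.
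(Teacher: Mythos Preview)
Your proposal is correct and follows essentially the same approach as the paper: both compute the log-odds of the naive Bayes posterior via Bayes' rule and the conditional independence assumption, then use the binary-feature identity $P(X_i=x_i\mid c)=P(x_i\mid c)^{x_i}P(\bar x_i\mid c)^{1-x_i}$ to split each term into an $x_i$-coefficient and a constant, reading off $w_i$ and $w_0$ respectively. Your uniqueness argument (evaluating at $\xs=\mathbf{0}$ and the standard basis vectors) is just a slightly more explicit restatement of what the paper does when it plugs in $x_i=0$ for $w_0$ and takes the coefficient of $x_i$ for $w_i$.
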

The lemma above can be drawn from~\citet{Roos2005,ng2002discriminative}. Complete proofs of all lemmas can be found in Appendix~\ref{sec:appx-proofs}.

\begin{figure}[tb]
    \centering
    \begin{subfigure}[c]{\columnwidth}
        \centering
        \begin{tikzpicture}
            \node (func) at (0bp,0bp) {
                \footnotesize
                $
                \ws = \begin{bmatrix}-1.16\\2.23\\-0.20\end{bmatrix}
                $
            };
            
            \node[right=5bp of func] {
                \scriptsize
                \begin{tabular}{ccc}
                    \toprule
                    $X_1$ & $X_2$ & $\function(x_1,x_2)$\\
                    \midrule
                    1 & 1 & 0.70 \\
                    1 & 0 & 0.74 \\
                    0 & 1 & 0.20 \\
                    0 & 0 & 0.24 \\
                    \bottomrule
                \end{tabular}
            };
        \end{tikzpicture}
        \caption{Logistic regression weights and resulting predictions.}\label{fig:ex-lr}
    \end{subfigure}
    \\
    \begin{subfigure}[c]{\columnwidth}
        \centering
      \begin{tikzpicture}[bayesnet]
          \def\lone{15bp}
          \def\ltwo{-15bp}
        
          \node (C) at (0bp,\lone) [bnnode] {$C$};
          \node (X1) at (-15bp,\ltwo) [bnnode] {$X_1$};
          \node (X2) at (15bp,\ltwo) [bnnode] {$X_2$};
          
          \begin{scope}[on background layer]
            \draw [bnarrow] (C) -- (X1);
            \draw [bnarrow] (C) -- (X2);
          \end{scope}
          
          \node[cpt] (cpt1-c) at (50bp,22bp) {
            \begin{tabular}{c}
              \toprule
              $P_1(c)$\\\midrule
              \multirow{2}{*}{$0.5$}\\
              \\
              \bottomrule
            \end{tabular}
          };
          
          \node[cpt,right=0bp of cpt1-c] (cpt1-x1){
            \begin{tabular}{ l c}
              \toprule
              $C$ & $P_1(x_1|C)$\\\midrule
              1 & $0.8$\\
              0 & $0.3$\\
              \bottomrule
            \end{tabular}
          };
          
          \node[cpt,right=0bp of cpt1-x1]{
            \begin{tabular}{lc}
              \toprule
              $C$ & $P_1(x_2|C)$\\\midrule
              1 & $0.45$\\
              0 & $0.5$\\
              \bottomrule
            \end{tabular}
          };
          
          \node[cpt] (cpt2-c) at (50bp,-22bp) {
            \begin{tabular}{c}
              \toprule
              $P_2(c)$\\\midrule
              \multirow{2}{*}{$0.36$}\\
              \\
              \bottomrule
            \end{tabular}
          };
          
          \node[cpt,right=0bp of cpt2-c] (cpt2-x1){
            \begin{tabular}{lc}
              \toprule
              $C$ & $P_2(x_1|C)$\\\midrule
              1 & $0.6$\\
              0 & $0.14$\\
              \bottomrule
            \end{tabular}
          };
          
          \node[cpt,right=0bp of cpt2-x1]{
            \begin{tabular}{lc}
              \toprule
              $C$ & $P_2(x_2|C)$\\\midrule
              1 & $0.9$\\
              0 & $0.92$\\
              \bottomrule
            \end{tabular}
          };
    \end{tikzpicture}
    \caption{Two naive Bayes distributions with same structure.}\label{fig:ex-nb}
    \end{subfigure}
    \caption{Logistic regression $\function(\xs)=\sigmoid(\ws^T \xs)$ and two conformant naive Bayes models.}
\end{figure}

Consider for example the naive Bayes (NB) distribution $P_1$ in Figure~\ref{fig:ex-nb}. For all possible feature observations, the NB classification $P_1(c\given\xs)$ is equal to that of logistic regression (LR) $\function$ in Figure~\ref{fig:ex-lr}, whose weights are as given by above lemma (i.e., $P_1$ conforms with $\function$).
Furthermore, distribution $P_2$ also translates into the same logistic regression. In fact, there can be infinitely many such naive Bayes distributions.
\begin{lemma}
\label{lem:lr-to-nb}
Given a logistic regression $\function$ and ${\thetas \in (0,1)^n}$, there exists a unique naive Bayes model $P$ such that
\begin{align*}
    &P (c \given \xs) = \function(\xs), \quad \forall\: \xs \\
    &P (x_i \given \cs) = \theta_i, \quad i=1,\dots,n.
\end{align*}
\end{lemma}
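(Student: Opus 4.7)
The plan is to invert the correspondence of Lemma~\ref{lem:nb-to-lr}. I would treat its two weight formulas as a system whose unknowns are the naive Bayes parameters $\pi := P(c)$ and $\alpha_i := P(x_i \mid \notc)$, with the logistic regression weights $w_0, \dots, w_n$ of $\function$ and the prescribed $\theta_i := P(x_i \mid c)$ playing the role of known quantities. The key structural observation is that this system decouples: each equation for $w_i$ with $i \geq 1$ involves only $\alpha_i$ once $\theta_i$ is fixed, and the $w_0$ equation then pins down $\pi$ given everything else.

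First, substituting $\theta_i$ and $1-\theta_i$ into the formula for $w_i$ from Lemma~\ref{lem:nb-to-lr} yields a single equation in $\alpha_i$, which solves in closed form to
\[
\alpha_i \;=\; \frac{\theta_i}{\theta_i + e^{w_i}(1-\theta_i)}.
\]
Since $\theta_i \in (0,1)$ and $e^{w_i} > 0$, the denominator is positive and strictly greater than $\theta_i$, so $\alpha_i \in (0,1)$. Uniqueness of this solution is immediate because the log-odds expression defining $w_i$ is strictly monotonic in $\alpha_i$ when $\theta_i$ is held fixed.

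Next, I would substitute the fixed $\theta_i$'s and the derived $\alpha_i$'s into the bias equation of Lemma~\ref{lem:nb-to-lr}, reducing it to $\log(\pi/(1-\pi)) = K$ for a specific real constant $K$ that depends only on $w_0$, $\theta_i$, and $\alpha_i$. This has the unique solution $\pi = \sigmoid(K) \in (0,1)$. Finally, I would define $P$ to be the naive Bayes distribution with parameters $(\pi,\{\theta_i\},\{\alpha_i\})$ and appeal once more to Lemma~\ref{lem:nb-to-lr}: the logistic regression that $P$ conforms with has weights matching $w_0,\dots,w_n$ by construction, so $P(c \mid \xs) = \function(\xs)$ for all $\xs$, while the constraints $P(x_i \mid c) = \theta_i$ hold by design.

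The whole argument is essentially a direct computation once the decoupling is noticed, and there is no serious obstacle. The only step that needs a bit of care is the feasibility check --- confirming that every derived parameter lands in the open unit interval --- but this reduces to routine positivity reasoning on the explicit closed-form expressions above.
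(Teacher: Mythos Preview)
Your proposal is correct and follows essentially the same route as the paper: both invert the weight formulas of Lemma~\ref{lem:nb-to-lr}, fix $P(x_i\mid c)=\theta_i$, solve the decoupled equation for each $P(x_i\mid\notc)$ (your $\alpha_i$ matches the paper's $1/(1+e^{w_i}(1-\theta_i)/\theta_i)$ after simplification), and then recover $P(c)$ via the sigmoid of the residual from the bias equation. If anything, you are more explicit than the paper about the feasibility check ($\alpha_i,\pi\in(0,1)$) and the monotonicity argument for uniqueness, both of which the paper leaves implicit.
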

That is, given a logistic regression there are infinitely many naive Bayes models that conform with it. Moreover, after fixing values for $n$ parameters of the NB model, there is a uniquely corresponding naive Bayes model.

We can expect this phenomenon to generalize to other generative-discriminative pairs; given a conditional distribution $P(C\given\Xs)$ there are many possible feature distributions $P(\Xs)$ to define a joint distribution $P(\Xs,C)$. 
For instance, distributions $P_1$ and $P_2$ in Figure~\ref{fig:ex-nb} assign different probabilities to feature observations; $P_1(\bar{x}_1,\bar{x}_2)=0.23$ whereas $P_2(\bar{x}_1,\bar{x}_2)=0.06$.
Hence, we wish to define which one of these models is the ``best''. Naturally, we choose the one that best explains a given dataset of feature observations.\footnote{Here we assume i.i.d.\ sampled data. If a true distribution is known, we can equivalently minimize the KL-divergence to it.}

\begin{definition}
    Let $\function: \mathcal{X} \to [0,1]$ be a discriminative classifier and $D$ be a dataset where each example $d$ is a joint assignment to $\Xs$. Given a family of distributions over $C$ and $\Xs$, let $\params$ denote the subset of them that conforms with $\function$.
    Then \emph{conformant learning} on $D$ is to solve the following optimization:
    \begin{equation}
        \argmax_{P \in \params} \prod_{d \in D} P(d) = \argmax_{P \in \params} \prod_{d=(\xs) \in D} \sum_c P(\xs,c). \label{eq:conformant}
    \end{equation}
\end{definition}
The learned model thus conforms with $\function$ and defines a feature distribution; therefore, we can take the expectation of $\function$ via probabilistic inference. In other words, it attains the desired classification performance of the given discriminative model while also returning sophisticated predictions under missing features.
Specifically, conformant  naive Bayes models can be used to efficiently take expectations of logistic regression classifiers.
Note that this does not contradict Theorem~\ref{thm:lr-over-nb} which considers arbitrary pairs of LR and NB models.


\section{Naive Conformant Learning}
\label{sec:logistic_regression_expectations}
In this section, we study a special case of conformant learning -- naive conformant learning (NaCL), and show how it can be solved as a geometric program.

A naive Bayes distribution is defined by a parameter set $\theta$ that consists of $\pa{c}{}, \pa{\notc}{}$, and $\pa{x}{c}, \pa{x}{\notc}$ for all $x$.
\emph{Naive conformant learning} outputs the naive Bayes distribution $P_\theta$ that maximizes the (marginal) likelihood given the dataset and conforms with a given logistic regression model $\function$.

We will next show that above problem can be formulated as a \emph{geometric program}, an optimization problem of the form:
\begin{alignat*}{3}
    &\text{min}\: &&f_0(x) &&\\
    &\text{s.t} && f_i(x) \leq 1, && \quad i=1 \ldots m \\
    & && g_i(x) = 1, && \quad i=1 \ldots p
\end{alignat*}
where each $f_i$ is a posynomial and $g_i$ monomial. A \emph{monomial} is a function of the form $b x_1^{a_1}\cdots x_n^{a_n}$ defined over positive real variables $x_1,\dots,x_n$ where $b>0$ and $a_i\in\Real$. A \emph{posynomial} is a sum of monomials.
Every geometric program can be transformed into an equivalent convex program through change of variables, and thus its global optimum can be found efficiently~\citep{boyd2007tutorial}.

To maximize the likelihood, we instead minimize its inverse. 
Let $n(\xs)$ denote the number of times an assignment $\xs$ appears in dataset $D$. Then the objective function is:
\begin{equation*}
\resizebox{.91\linewidth}{!}{$
    \displaystyle
    \prod_{d\in D} P_\theta(d)^{-1}
    \! = \prod_\xs P_\theta(\xs)^{-n(\xs)}
    \! = \prod_\xs {\left(\sum_c \prod_{x\in\xs}\pa{x}{c} \pa{c}{}\right)}^{\! \! \! -n(\xs)}\! \! .
$}
\end{equation*}
Above formula, directly expanded, is not a posynomial. In order to express it as a posynomial we consider an auxiliary dataset $D^\prime$ constructed from $D$ as follows: 
for each data point $d_j\!=\!(\xs) \in D$, there are $d_{j,c}^\prime\!=\!(\xs,c) \in D^\prime$ with weight $\alpha_{j,c}$ for all values of $c$.
If the weights are such that $\alpha_{j,c} \geq 0$ and $\sum_c \alpha_{j,c}=1$ for all $j$, then the inverse of the expected joint likelihood given the new dataset $D^\prime$ is
\begin{align}
    &\prod_{d_{j,c}^\prime=(\xs,c) \in D^\prime} P_\theta(\xs,c)^{-\alpha_{j,c}} \nonumber \\
    &= \prod_{d_j=(\xs)\in D} \prod_c P_\theta(\xs)^{-\alpha_{j,c}} P_\theta(c \given \xs)^{-\alpha_{j,c}} \nonumber \\
    &= \prod_{d\in D} P_\theta(d)^{-1} \cdot \prod_{d_j=(\xs)\in D, c} P_\theta(c\given\xs)^{-\alpha_{j,c}}. \label{eq:decomp-ll}
\end{align}
For any $P_\theta \in \params$, the conditional distribution $P_\theta(C \given \Xs)$ is fixed by the logistic regression model; in other words, the last product term in Equation~\ref{eq:decomp-ll} is a constant. Therefore, maximizing the expected (joint) likelihood on a completed dataset must also maximize the marginal likelihood, which is our original objective.
Intuitively, maximizing the joint likelihood on any dataset is equivalent to maximizing the marginal likelihood $P(\Xs)$ if the conditional distribution $P(C \given \Xs)$ is fixed.
Now our objective function can be written as a monomial in terms of the parameters:
\begin{equation}
\resizebox{.91\linewidth}{!}{$
    \displaystyle
    \prod_{d_{j,c}^\prime \in D^\prime} P_\theta(d_{j,c}^\prime)^{-\alpha_{j,c}}
    = \prod_{d_{j,c}^\prime=(\xs,c) \in D^\prime} \left( \pa{c}{} \prod_{x\in\xs} \pa{x}{c} \right)^{-\alpha_{j,c}}.
$} \label{eq:joint-ll-monomial}
\end{equation}

Next, we express the set of conformant naive Bayes distributions $\params$ as geometric program constraints in terms of $\theta$.
An NB model $P_\theta$ conforms with an LR $\function$ if and only if its corresponding logistic regression weights, according to Lemma~\ref{lem:nb-to-lr}, match those of $\function$.
Hence, $P_\theta \in \params$ precisely when
\begin{gather}
    e^{w_0}\ \pa{c}{}^{-1}\ \pa{\notc}{}\ \prod_{i=1}^n\  \pa{\bar{x}_i}{c}^{-1}\ \pa{\bar{x}_i}{\notc} = 1 \label{eq:constraint_w0} \\
    e^{w_i}\ \pa{x_i}{c}^{-1}\ \pa{x_i}{\notc}\ \pa{\bar{x}_i}{c}\ \pa{\bar{x}_i}{\notc}^{-1} = 1, \ \forall\:i \label{eq:constraint_wi}
\end{gather}

We also need to ensure that the parameters define a valid probability distribution (e.g., \pa{c}{} + \pa{\notc}{} = 1). Because such posynomial equalities are not valid geometric program constraints, we instead relax them to posynomial inequalities:\footnote{The learned parameters may not sum to 1. They can still be interpreted as a multi-valued NB with same likelihood that conforms with $\function$. These constraints were always active in our experiments.}
\begin{gather}
\resizebox{.91\linewidth}{!}{$
    \pa{c}{} + \pa{\notc}{} \leq 1,\: 
    \pa{x_i}{c} + \pa{\bar{x}_i}{c} \leq 1,\: \pa{x_i}{\notc} + \pa{\bar{x}_i}{\notc} \leq 1, \:\forall\:i
$}\label{eq:sum-to-one}
\end{gather}
Putting everything together, naive conformant learning can be solved as a geometric program whose objective function is given by Equation~\ref{eq:joint-ll-monomial} and constraints by Equations~\ref{eq:constraint_w0}~--~\ref{eq:sum-to-one}.
We used the GPkit library \citep{gpkit} to solve our geometric programs.


\begin{table}[t]
    \centering
        {\fontsize{6.5}{9}\selectfont
    \begin{sc}
    \begin{tabular}{@{}l c c  c  c @{}}
        \toprule
        Datasets & Size & \# Classes: Dist. & \# Features & Feature Types \\
        \midrule 
        MNIST & 60K & 10: Balanced & 784 & Int.\ pixel value \\
        Fashion & 60K & 10: Balanced & 784 & Int.\ pixel value \\
        CovType & 581K & 7: Unbalanced & 54 & Cont.\ $\&$ Categorical \\
        Adult & 49K & 2: Unbalanced & 14 & Int.\ $\&$ Categorical \\
        Splice & 3K & 3: Unbalanced & 61 & Categorical \\
        \bottomrule
    \end{tabular}
    \end{sc}
    }
    \caption{Summary of our testbed.}
    \label{table: datasets}
\end{table}
\begin{figure*}[tb]
    \centering
        \begin{subfigure}[c]{.49\textwidth}
            \centering
            \includegraphics[height=0.28\columnwidth]{./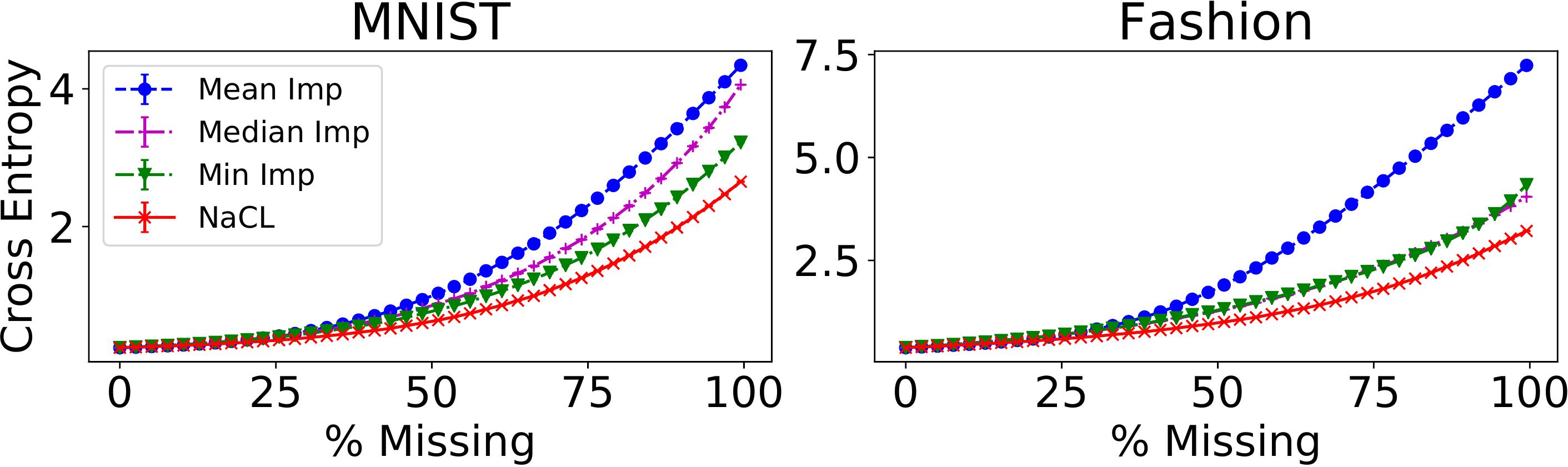}
            \caption{Cross Entropy (fidelity to logistic regression predictions)}
            \label{fig: mnist_cross_entropy}
        \end{subfigure}
        \begin{subfigure}[c]{.49\textwidth}
            \centering
            \includegraphics[height=0.28\columnwidth]{./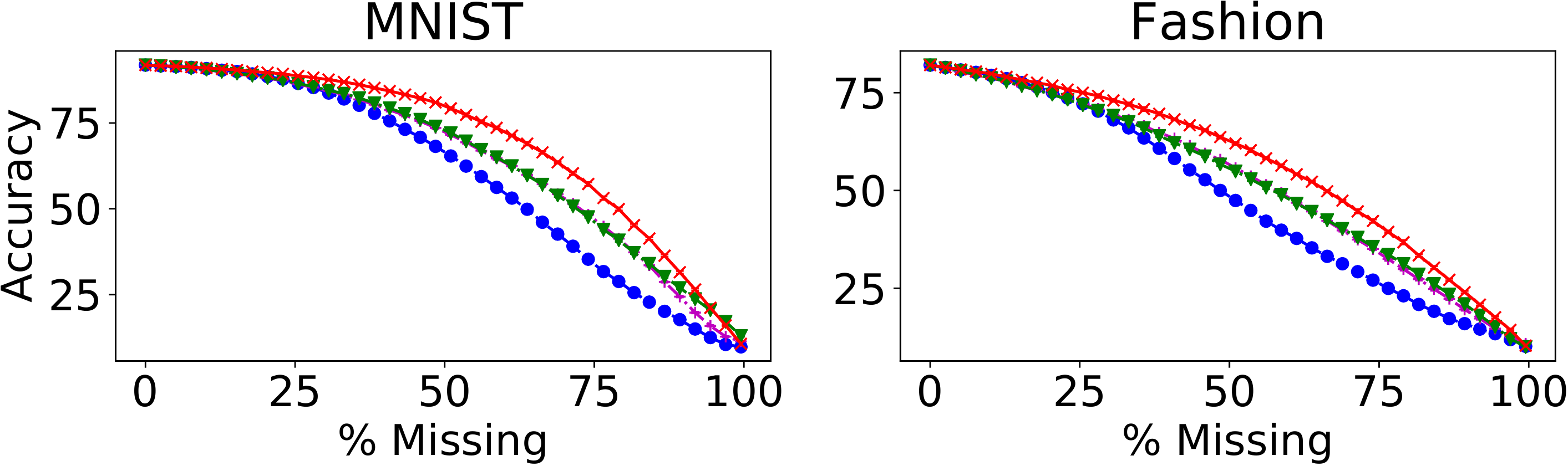}
            \caption{Accuracy (impact on test-set prediction quality)}
            \label{fig: mnist_accuracy}
        \end{subfigure}
        \caption{Standard image classification datasets: comparison of average cross entropy to the original predictions and classification accuracies between naive conformant learning (NaCL) and commonly used imputation methods. The conditional probabilities from NaCL are consistently closest to the full-data predictions, and NaCL consistently outperforms other methods with different percentages missing features. }
        \label{fig:acc-ce-combined}
\end{figure*}

\begin{samepage}
\begin{table*}[!ht]
    \centering
    {\fontsize{8.6}{9}\selectfont
    \begin{sc}
        \begin{tabular}{ @{}l c c c  c c c c c c c c c@{} }
        \toprule
        {Cross Entropy} & \multicolumn{4}{c}{CovType} & \multicolumn{4}{c}{Adult} & \multicolumn{4}{c}{Splice} \\
    	\cmidrule(lr){2-5} \cmidrule(lr){6-9} \cmidrule(lr){10-13}  
    	   {Under $\%$ Missing} &20$\%$ & 40$\%$ & 60$\%$ & 80$\%$ & 20$\%$ & 40$\%$ & 60$\%$ & 80$\%$ & 20$\%$ & 40$\%$ & 60$\%$ & 80$\%$\\
    	\midrule
Min Imputation    &   12.8 & 15.5 & 20.7 & 29.4 &                              41.0 & 49.2 & 55.4 & 59.6 &             71.3 & 81.8 & 97.2 & 117.2 \\
Max Imputation    &   52.6 & 89.1 & 133.5 & 187.7 &                            84.2 & 114.6 & 125.3 & 114.5 &             70.5 & 78.3 & 89.3 & 103.2 \\
Mean Imputation   &   12.8 & 15.6 & 21.2 & 30.5 &                              34.1 & 38.7 & 44.8 & 52.6 &             69.2 & \textbf{74.7} & \textbf{82.4} & 92.3 \\
Median Imputation &   12.8 & 15.7 & 21.4 & 30.8 &                              35.3 & 41.2 & 48.6 & 57.8 &             70.0 & 75.7 & 83.0 & \textbf{92.0} \\
\midrule \midrule
Naive Conformant Learning         &   \textbf{12.6} & \textbf{14.8} & \textbf{18.9} & \textbf{25.8} &   \textbf{33.6} & \textbf{37.0} & \textbf{41.2} & \textbf{46.6} &             \textbf{69.1} & \textbf{74.7} & 82.8 & 94.0 \\
        \bottomrule
        \end{tabular}
    \end{sc}
    }
    \caption{Three unbalanced UCI datasets with categorical features: comparison of average cross entropy to the original predictions between naive conformant learning (NaCL) and commonly used imputation methods. The closest are denoted in bold.}
    \label{table: cross entropy UCI datasets}
\end{table*}
\begin{table*}[!h]
    \centering
    {\fontsize{9}{9}\selectfont
    \begin{sc}
        \begin{tabular}{ @{}l c c c  c c c c c c c c c@{} }
        \toprule
        {Weighted F1} & \multicolumn{4}{c}{CovType} & \multicolumn{4}{c}{Adult} & \multicolumn{4}{c}{Splice} \\
    	\cmidrule(lr){2-5} \cmidrule(lr){6-9} \cmidrule(lr){10-13}  
    	 {under $\%$ Missing} &20$\%$ & 40$\%$ & 60$\%$ &80$\%$ & 20$\%$ & 40$\%$ & 60$\%$ & 80$\%$ & 20$\%$ & 40$\%$ & 60$\%$ & 80$\%$ \\
    	\midrule
Min Imputation    &            64.0 & 58.1 & 52.2 & 46.1 &                                       81.7 & 79.3 & 77.5 & \textbf{76.0} &                           86.9 & 69.8 & 49.2 & 38.8   \\    
Max Imputation    &            49.8 & 44.4 & 41.6 & 37.3 &                                       81.7 & 79.3 & 77.4 & \textbf{76.0} &                           86.9 & 69.8 & 49.1 & 38.8   \\
Mean Imputation   &            64.0 & 58.0 & 52.2 & 46.3 &                                       82.9 & 79.8 & 75.3 & 70.7 &                                    91.8 & 82.3 & 66.2 & 45.7   \\
Median Imputation &            64.0 & 58.1 & 52.2 & 46.1 &                                       82.7 & 79.2 & 74.8 & 70.5 &                                    89.4 & 77.6 & 59.5 & 42.5   \\
\midrule
\midrule
Naive Conformant Learning &    \textbf{66.1} & \textbf{61.7} & \textbf{56.9} & \textbf{51.7} &   \textbf{83.4} & \textbf{81.2} & \textbf{77.9} & 73.5 &         \textbf{93.3} & \textbf{87.2} & \textbf{76.6} & \textbf{59.1} \\
        \bottomrule
        \end{tabular}
    \end{sc}
    }
\caption{Three unbalanced UCI datasets with categorical features: comparison of weighted F1 scores between naive conformant learning (NaCL) and commonly used imputation. The highest are denoted in bold.}
\label{table: f1 scores uci dataset}
\end{table*}
\end{samepage}

\section{Empirical Evaluation}

In this section, we empirically evaluate the performance of naive conformant learning (NaCL) and provide a detailed discussion of our method's advantages over existing imputation approaches in practice.\footnote{Our implementation of the algorithm and experiments are available at \url{https://github.com/UCLA-StarAI/NaCL}.} More specifically, we want to answer the following questions:
\begin{description}
    \item[Q1] Does NaCL reliably estimate the probabilities of the original logistic regression with full data? How do these estimates compare to those from imputation techniques, including ones that also model a feature distribution?
    \item[Q2] Do higher-quality expectations of a logistic regression classifier result in higher accuracy on test data?
    \item[Q3] Does NaCL retain logistic regression's higher predictive accuracy over unconstrained naive Bayes?
\end{description}

\paragraph{Experimental Setup}
To demonstrate the generality of our method, we construct a 5-dataset testbed suite that covers assorted configurations \citep{yann2009mnist,fashion2017,blackard1999comparative,Dua2017,noordewier1991training}; see Table~\ref{table: datasets}. The suite ranges from image classification to DNA sequence recognition; from fully balanced labels to $>\!75\%$ of samples belonging to a single class; from continuous to categorical features with up to 40 different values. For datasets with no predefined test set, we construct one by a $80\!:\!20$ split. As our method assumes binary inputs, we transform categorical features through one-hot encodings and binarize continuous ones based on whether they are 0.05 standard deviation above their respective mean. 

Our algorithm takes as input a logistic regression model which we trained using fully observed training data. During prediction time, we make the features go missing uniformly at random based on a set missingness percentage, which corresponds to a missing completely at random (MCAR) mechanism~\citep{little2014statistical}. We repeat all experiments for 10 (resp.\ 100) runs on MNIST, Fashion, and CovType (resp.\ Adult and Splice) and report the average.


\subsection{Fidelity with the Original Predictions}
\label{sec: fidelity}
The optimal method to deal with missing values would be one that enables the original classifier to act as if no features were missing. 
In other words, we want the predictions to be affected as little as possible by the missingness. As such, we evaluate the similarity between predictions made with and without missingness, measured by the average cross entropy. The results are reported in Figure~\ref{fig: mnist_cross_entropy}\footnote{Max imputation results are dismissed as they are orders of magnitude worse than the rest.}
and Table~\ref{table: cross entropy UCI datasets}; the error bars were too small be visibly noticeable and were omitted in the plots. In general, our method outperforms all the baselines by a significant margin, demonstrating the superiority of the expected predictions produced by our method.

We also compare NaCL with two imputation methods that consider the feature distribution, namely EM~\citep{dempster1977maximum} and MICE~\citep{buuren2010mice}. EM imputation reports the second-to-worst average cross entropies and MICE's results are very similar to those of mean imputation when $1\%$ of features are missing. Due to the fact that both EM and MICE are excessively time-consuming to run and their imputed values are no better quality than more lightweight alternatives, we do not compare with them in the rest of the experiments. We would like to especially emphasize this comparison; it demonstrates that directly leveraging feature distributions without also considering how the imputed values impact the classifier may lead to unsatisfactory predictions, further justifying the need for solving the expected prediction task and conformant learning. This also concludes our answer to {Q1}.

\subsection{Classification Accuracy}
\label{section: accuracy}
Encouraged by the fact that NaCL produces more reliable estimates of the conditional probability of the original logistic regression, we further investigate how much it helps achieve better classification accuracy under different percentages of missing features (i.e., Q2). As suggested by Figure~\ref{fig: mnist_accuracy} and Table~\ref{table: f1 scores uci dataset},\footnote{We report weighted F1 scores as the datasets are unbalanced.} NaCL consistently outperforms all other methods except on the Adult dataset with $80\%$ of the features missing.

Lastly, to answer {Q3} we compare NaCL to a maximum-likelihood naive Bayes model.\footnote{We do not report the full set of results in the table because maximum-likelihood learning of naive Bayes optimizes for a different loss and effectively solves a different task than NaCL and the imputation methods.} In all datasets except Splice, logistic regression achieves higher classification accuracy than naive Bayes with fully observed features. NaCL maintains this advantage until $40\%$ of the features go missing, further demonstrating the effectiveness of our method. Note that these four datasets have a large number of samples, which is consistent with the prevailing consensus that discriminative learners are better classifiers given a sufficiently large number of samples~\citep{ng2002discriminative}.

\section{Case Study: Sufficient Explanations}

In this section we briefly discuss utilizing conformant learning to explain classifications and show some empirical examples as a proof of concept.

On a high level, the task of explaining a particular classification can be thought of as quantifying the ``importance'' of each feature and choosing a small subset of the most important features as the explanation. Linear models are widely considered easy to interpret, and thus many explanation methods learn a linear model that is closely faithful to the original one, and then use the learned model to assign importance to features \citep{lime,NIPS2017_7062,pmlr-v70-shrikumar17a}. 
These methods often assume a black-box setting, and to generate explanations they internally evaluate the predictor on multiple perturbations of the given instance. A caveat is that the perturbed values may have a very low probability on the distribution the classifier was trained on. This can lead to unexpected results as machine learning models typically only guarantee generalization if both train and test data are drawn from the same distribution. 

Instead we propose to leverage the feature distribution in producing explanations. To explain a given binary classifier, we consider a small subset of feature observations that is sufficient to get the same classification, in expectation w.r.t.\ a feature distribution. Next, we formally define our method:
\begin{definition}{(Support and Opposing Features)}
    Given $\function$, $P$, and $\xs$, we partition the given feature observations into two sets. The first set consists of the \emph{support} features that contribute towards the classification of $\function(\xs)$:
    \begin{align*}
        \xs_{+} = \begin{cases}
            \left\{ x \in \xs:  E_{\function, P}(\xs \setminus x) \leq \function(\xs) \right\}& \text{ if $\function(\xs) \geq 0.5$, } \\
            \left\{ x \in \xs:  E_{\function, P}(\xs \setminus x) > \function(\xs) \right\}& \text{ otherwise. }
        \end{cases}
    \end{align*}
    The rest are the \emph{opposing} features that provide evidence against the current classification: $\xs_{-} = \xs \setminus \xs_{+}$.
\end{definition}

\begin{definition}{}
    \emph{Sufficient explanation} of $\function(\xs)$ with respect to $P$ is defined as the following:
    \begin{align*}
        &\argmin_{\es \subseteq \xs_{+}}\abs{\es} \\
        \text{\:s.t.\:} \sgn(E_{\function,P}(\es\xs_{-}) &- 0.5) = \sgn(\function(\xs) - 0.5)
    \end{align*}
\end{definition}
Intuitively, this is the smallest set of support features that, in expectation, result in the same classification despite all the evidence to the contrary. In other words, we explain a classification using the strongest evidence towards it.

\begin{figure}[tb]
    \centering
    \begin{subfigure}[c]{.9\columnwidth}
        \centering
        \includegraphics[width=0.9\columnwidth]{./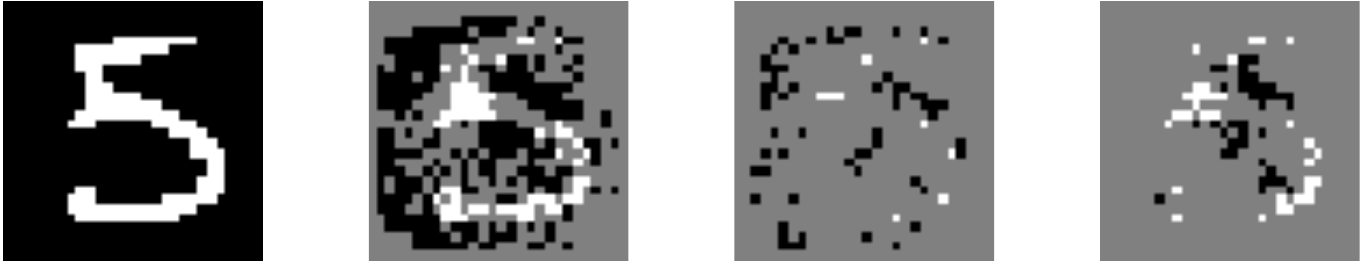}
        \includegraphics[width=0.9\columnwidth]{./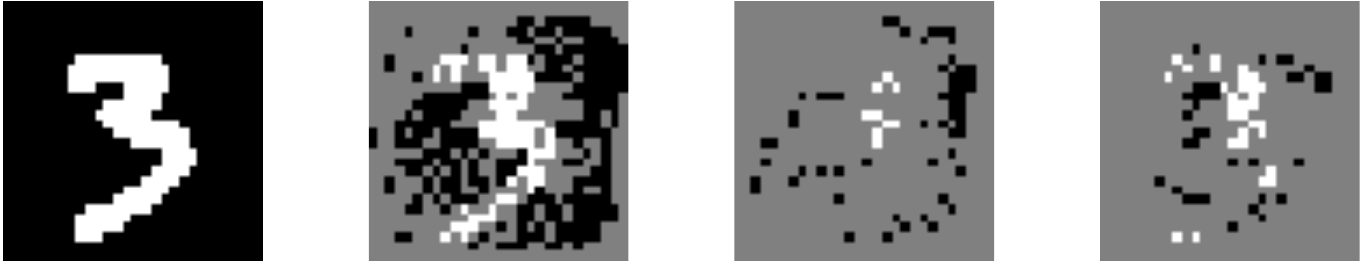}
        \caption{Correctly classified examples}
        \label{fig:suf_correct_classified}
    \end{subfigure}
    \begin{subfigure}[c]{.9\columnwidth}
        \centering
        \includegraphics[width=0.9\columnwidth]{./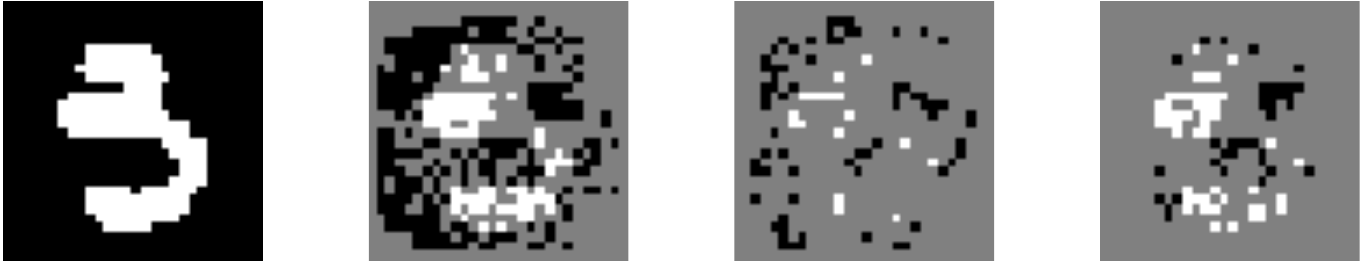}
        \includegraphics[width=0.9\columnwidth]{./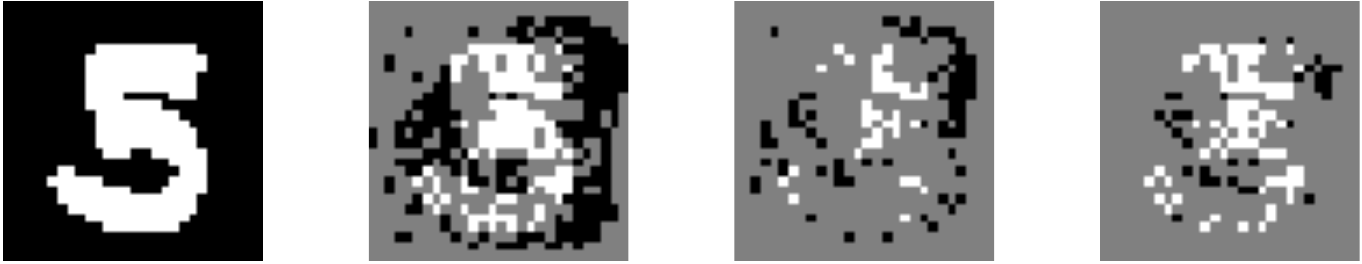}
        \caption{Misclassified examples}
        \label{fig:suf_miss_classified}
    \end{subfigure}
  \caption{
    Explanations for MNIST classifications. Grey features were not chosen as explanations; white/black are the true color of chosen features.
    From left to right:
    1) all features;
    2) support features;
    3) top-$k$ support features;
    4) sufficient explanation of size $k$.
  }
  \label{fig:explain}
\end{figure}

For a qualitative evaluation, we generate sufficient explanations on instances of a binary logistic regression task for MNIST digits 5 and 3; see the last column of Figure~\ref{fig:explain}.
Take the first example in Figure~\ref{fig:suf_correct_classified}: the white pixels selected as sufficient explanation show that the digit should be a 5. Also notice the black pixels in the explanation: they express how the absence of white pixels significantly contributes to the classification, especially in parts of the image where they would be expected for the opposing class.
Similarly, the black pixels in the first example in Figure~\ref{fig:suf_miss_classified} look like a 3, and the white pixels in the explanation look like a 5, explaining why this 3 was misclassified as a 5.
We further compare our approach to an alternative one that selects a subset of support features based on their logistic regression weights; see the third column of Figure~\ref{fig:explain}. It selects features that will cause a large difference in prediction if the value was flipped, as opposed to missing, which is what sufficient explanation considers.

\section{Related Work}
There have been many approaches developed to classify with missing values, which can broadly be grouped into two different types. The first one focuses on increasing classifiers' inherent robustness to feature corruption, which includes missingness. A common way to achieve such robustness is to spread the importance weights more evenly among features \citep{Globerson:2006:NTT:1143844.1143889,dekel2008learning,xia2017adjusted}. One downside of this approach is that the trained classifier may not achieve its best possible performance if no features go missing. 

The second one investigates how to impute the missing values. In essence, imputation is a form of reasoning about missing values from observed ones \citep{sharpe1995dealing,batista2002study,mcknight2007missing}. An iterative process is commonly used during this reasoning process \citep{buuren2010mice}. Some recent works also adapt auto-encoders and GANs for the task \citep{costa2018missing,mattei2019miwae}. Some of these works can be incorporated into a framework called multiple imputations to reflect and better bound one's uncertainty \citep{schafer1999multiple,azur2011multiple,gondara2018mida}. These existing methods focus on substituting missing values with those closer to the ground truth, but do not model how the imputed values interact with the trained classifier. On the other hand, our proposed method explicitly reasons about what the classifier is expected to return. 

We are among the first to incorporate feature distributions to generate explanations. Notable recent work along this line includes \citet{pmlr-v80-chen18j}, which proposes to maximize the mutual information between selected features and the class. To more explicitly leverage a feature distribution, \citet{chang2018explaining} proposes to explain a classification by a subset of features that maximally affect the classifier output, when its values are substituted by in-fills sampled from the feature distribution conditioned on the rest of the features.
This contrasts with our method which studies the affect of certain features on a classifier by marginalizing, rather than sampling.

\section{Conclusion \& Future Work}

In this paper we introduced the expected prediction task, a principled approach to predicting with missing features. It leverages a feature distribution to reason about what a classifier is expected to return if it could observe all features. We then proposed conformant learning to learn joint distributions that conform with and can take expectations of discriminative classifiers. A special instance of it--naive conformant learning--was shown empirically to outperform many existing imputation methods.
For future work, we would like to explore conformant learning for other generative-discriminative pairs of models, and extend NaCL to real-valued features.


\section*{Acknowledgements}

This work is partially supported by NSF grants \#IIS-1657613, \#IIS-1633857, \#CCF-1837129, DARPA
XAI grant \#N66001-17-2-4032, NEC Research, and gifts from Intel and Facebook Research.

\appendix
\setcounter{equation}{0}

\section{Proofs}\label{sec:appx-proofs}

\subsection{Proof of Theorem~\ref{thm:lr-over-nb}}
    The proof is by reduction from computing the same-decision probability, whose decision problem $\textbf{D-SDP}$ was shown to be \NP-hard.~\citep{chen2013sdp}

    Given a naive Bayes distribution $P(.)$ over variables $C$ and $\Xs$, a threshold $T$, and a probability $p$, $\textbf{D-SDP}$ asks: is the same-decision probability $\sum_{\xs} \mathbb{I}(P(c \given \xs) > T) P(\xs)$
    greater than $p$?
    Here, $\mathbb{I}(.)$ denotes an indicator function which returns 1 if the enclosed expression is true, and 0 otherwise.

    Using Lemma~\ref{lem:nb-to-lr} we can efficiently translate a naive Bayes model $P$ to a logistic regression with a weight function $w(.)$ such that
    \begin{equation*}
        P(c \given \xs) = \frac{1}{1+e^{-w(\xs)}}.
    \end{equation*}
    Note that $P(c \given \xs) > T$ iff $w(\xs) > -\log(\frac{1}{T} - 1)$. Then we construct another logistic regression with weight function
    \begin{equation*}
        w^\prime(\xs) = n \cdot \left(w(\xs) + \log\left(\frac{1}{T}-1\right)\right),
    \end{equation*}
    for some positive constant $n$. As $w$ is a linear model, $w^\prime$ is also linear, and $w^\prime(\xs) > 0$ iff $P(c \given \xs)\!>\!T$.
    As $n$ grows, $w^\prime(.)$ approaches $\infty$ and $-\infty$ for positive and negative examples, respectively. Hence, this logistic regression model outputs 1 if $P(c\given\xs)\!>\!T$ and 0 otherwise, effectively being an indicator function.
    Therefore, the expectation of such classifier over $P(\Xs)$ is equal to the same-decision probability of $\Xs$.
\qed

\subsection{Proof of Lemma \ref{lem:nb-to-lr}}

We want to prove there is a unique $\function$ such that $\function(\xs) = {P(c \mid \xs)}$ for all $\xs$ given naive Bayes distribution $P$.
Using Bayes' rule and algebraic manipulation, we get:
\begin{align}
    &P (c \mid \xs) 
    =  \frac{ P(\xs \mid c)\ P(c)} {P(\xs \mid c)\ P(c) + P(\xs \mid \notc)\ P(\notc)} \nonumber \\ 
    =& \cfrac{1}{1 + \cfrac{ P(\xs \mid \notc)\ P(\notc)}{P(\xs \mid c)\ P(c)}}
    = \cfrac{1}{1 + \exp \bigg[-\log \cfrac{ P(\xs \mid c)\ P(c)}{P(\xs \mid \notc)\ P(\notc)} \bigg] }  \nonumber 
\end{align}
For any input $\xs$, we want above quantity to be equal to $\function(\xs) = 1 / (1 + \exp[-\sum_i w_i x_i])$.
In other words, we need:
\begin{equation*}
    \log \cfrac{ P(\xs \mid c)\ P(c)}{P(\xs \mid \notc)\ P(\notc)}
    = 
    \sum_i w_i\ x_i
\end{equation*}

\noindent Using naive Bayes assumption, we arrive at:
\begin{equation}\label{eq:lemma1_clnb}
    \sum_{i=0}^{n} w_i x_i = \log \frac{P(c)}{P(\notc)} + \sum_{i=1}^n \log \frac{P(x_i \mid c)}{P(x_i \mid \notc)}
\end{equation}

Now we want the RHS of Equation~\ref{eq:lemma1_clnb} to be a linear function of $x_i$'s, so we do the following substitution for $i > 0$ assuming binary features:
\begin{align}
    \label{eq:clnb_sub} \log \cfrac{P(x_i \mid c)}{P(x_i \mid \notc)} =&\, (x_i) \cdot \log \cfrac{P(x_i = 1 \mid c)}{P(x_i = 1 \mid \notc)}\  \\
    \nonumber &+ (1 - x_i) \cdot \log \cfrac{P(x_i = 0 \mid c)}{P(x_i = 0 \mid \notc)} 
\end{align}

By combining Equations~\ref{eq:lemma1_clnb} and \ref{eq:clnb_sub} we get the weights in Lemma~\ref{lem:nb-to-lr} by simple algebraic manipulations. To solve for the bias term $w_0$ we plug in $x_i\!=\!0$ for all $i\!>\!0$. To compute $w_i$ for a non-zero $i$ we take the coefficient of $x_i$ in Equation~\ref{eq:clnb_sub}.
%
%
\qed

\subsection{Proof of Lemma \ref{lem:lr-to-nb}}
Through the same algebraic manipulation as before, we get the same equations as in Lemma~\ref{lem:nb-to-lr} with the only difference being that we are now solving the parameters of a naive Bayes model rather than weights of the logistic regression model. Intuitively, because the NB model has $2n+1$ free parameters but the LR model only has $n+1$ parameters, we expect some degree of freedom. To get rid of the freedom and get a unique solution, we fix the values for $n$ parameters as follows:
\begin{equation}
  P(x_i = 1 \mid c) = \theta_i \label{eq:lemma2_fixparameters}
\end{equation}
Without loss of generality we have fixed the parameter values for positive features. One can equally set the values in other ways as long as one parameter value per feature is fixed. 

Now there is a unique naive Bayes model that matches the logistic regression classifier and also agrees with Equation~\ref{eq:lemma2_fixparameters}. That is, the remaining $n+1$ parameter values are given by the LR parameters, resulting in the following parameters for such naive Bayes model:
\begin{alignat*}{2}
    &P(x_i \mid c) = \theta_i, 
    &&P(\bar{x}_i \mid c) = 1-P(x_i \mid c), \\
    &P(x_i \mid \notc) = \cfrac{1}{1 + e^{w_i} \frac{1-\theta_i}{\theta_i}}, 
    &&P(\bar{x}_i \mid \notc) = 1-P(x_i \mid \notc), \\
    &P(c) = \sigmoid\bigg(w_0 -  \sum_{i=1}^{n} \log &&\cfrac{ P(\bar{x}_i \mid c)}{ P(\bar{x}_i \mid \notc)}\bigg).
\end{alignat*}
\qed

\section{Beyond Binary Classification: Multiclass}\label{sec:appx-multiclass}

In the paper, we studied logistic regression and conformant naive Bayes models assuming binary classification. We now show that our method can easily be extended to multiclass. We first modify our notation of logistic regression and naive Bayes models to allow for an arbitrary number of classes.
\begin{definition}{(Multiclass Classifiers)} \label{def:multiclass}
    Suppose we have a classifier with $K$ classes, each denoted by $c_k$ ($k \in [0, K-1]$). Then $\function_k$ denotes the conditional probability for class $c_k$ in logistic regression, and $P$ a naive Bayes distribution defined as:
\begin{gather*}
    \function_k(\xs) = \cfrac{e^{W_k \cdot \xs}}{\sum_j e^{W_j \cdot \xs}} \\
    P(c_k \mid \xs) = \cfrac{ P(c_k) \prod_{i=1}^{n} P(x_i \mid c_k) }{\sum_j P(c_j) \prod_{i=1}^{n} P(x_i \mid c_j) }
\end{gather*}
\end{definition}
\noindent We say $P$ conforms with $\function$ if their predictions agree for all classes: $P(c_k \given \xs) = \function_k(\xs)$ for all $k$ and $\xs$.

Next, we describe naive conformant learning for multiclass.
Instead of directly matching the predictions of $P$ and $\function$ for all classes, we match their ratios in order to simplify equations going forward.
Moreover, to get the same classifiers it suffices to divide by the probability of only one class, so without loss of generality we set the following be true.
\begin{equation*}
     \cfrac{\function_k(\xs)}{ \function_0(\xs) } = \cfrac{ P(c_k \mid \xs)}{ P(c_0 \mid \xs) }, \quad \forall\: k \in [1,K-1]
\end{equation*}
Using Definition~\ref{def:multiclass}, this leads to
\begin{equation*}
     \cfrac{e^{-W_k \cdot \xs}}
          {e^{-W_0 \cdot \xs}} = 
     \cfrac{P(c_k) \prod_{i=1}^{n} P(x_i \mid c_k)  }
          {P(c_0) \prod_{i=1}^{n} P(x_i \mid c_0)  },
    \quad \forall\: k \in [1,K-1].
\end{equation*}


The parameters of a multiclass naive Bayes are: $\pa{c_k}{}\!=\!P(c_k)$, $\pa{x_i}{c_k}\!=\!P(x_i = 1 \mid c_k)$, and $\pa{\bar{x}_i}{c_k}\!=\!P(x_i = 0 \mid c_k)$. Then, we get the following constraints for NaCL through similar algebraic manipulations as in the binary case:
\begin{gather}
  \label{eq:constraint_multi_pk} \sum\nolimits_k \pa{c_k}{} = 1  \\
    \label{eq:constraint_alpha_i} \pa{x_i}{c_k} + \pa{\bar{x}_i}{c_k}  = 1, \quad\forall\:i,k\!>\! 0  \\
    e^{w_{k,i} - w_{0, i}}\ \ \pa{x_i}{c_k}^{-1}\ \ \pa{\bar{x}_i}{c_k} \ \ \pa{x_i}{c_0}\ \ \pa{\bar{x}_i}{c_0}^{-1} = 1, \quad\forall\:i,k\!>\!0  \nonumber \\  
    e^{w_{k, 0} - w_{0,0}}\ \ \pa{c_k}{}^{-1}\ \ \pa{c_0}{}  \prod_{i=1}^{n} \pa{\bar{x}_i}{c_k}^{-1} \ \ \pa{\bar{x}_i}{c_0} = 1, \quad\forall\:k\!>\!0 \nonumber
\end{gather}
Again, we relax Equations~\ref{eq:constraint_multi_pk} and \ref{eq:constraint_alpha_i} to inequalities to obtain valid geometric program constraints. The rest of the method stays the same: we maximize the marginal likelihood with above constraints by minimizing the inverse of the joint likelihood on a completed datset, as described in Section~\ref{sec:logistic_regression_expectations}.

\bibliographystyle{named}
\bibliography{classifier_expect_ijcai}

\end{document}